\newtheorem{theorem}{Theorem}
\def\BibTeX{{\rm B\kern-.05em{\sc i\kern-.025em b}\kern-.08em
    T\kern-.1667em\lower.7ex\hbox{E}\kern-.125emX}}
\begin{document}

\newcommand{\x}{\mathbf{x}}
\newcommand{\z}{\mathbf{z}}
\newcommand{\y}{\mathbf{y}}
\newcommand{\w}{\mathbf{w}}
\newcommand{\data}{\mathcal{D}}

\newcommand{\reward}{r}
\newcommand{\policy}{\pi}
\newcommand{\mdp}{\mathcal{M}}
\newcommand{\states}{\mathcal{S}}
\newcommand{\actions}{\mathcal{A}}
\newcommand{\observations}{\mathcal{O}}
\newcommand{\transitions}{T}
\newcommand{\freq}{d}
\newcommand{\obsfunc}{E}
\newcommand{\initial}{\mathcal{I}}
\newcommand{\horizon}{H}
\newcommand{\rewardevent}{R}
\newcommand{\probr}{p_\rewardevent}
\newcommand{\metareward}{\bar{\reward}}
\newcommand{\discount}{\gamma}
\newcommand{\behavior}{{\pi_\beta}}
\newcommand{\bellman}{\mathcal{B}}
\newcommand{\qparams}{\phi}
\newcommand{\qparamset}{\Phi}
\newcommand{\qset}{\mathcal{Q}}
\newcommand{\batch}{B}
\newcommand{\qfeat}{\mathbf{f}}
\newcommand{\Qfeat}{\mathbf{F}}
\newcommand{\traj}{\tau}
\newcommand{\return}{\mathcal{R}}

\title{\LARGE \bf Transferable Latent-to-Latent Locomotion Policy for Efficient and Versatile Motion Control of Diverse Legged Robots}

\author{Ziang Zheng$^{\#1}$, Guojian Zhan$^{\#1}$, Bin Shuai$^{1}$, Shengtao Qin$^{1}$, Jiangtao Li$^{2}$, Tao Zhang$^{2}$, Shengbo Eben Li$^{*1}$
\thanks{This study is supported by National Key R\&D Program of China with 2022YFB2502901 and NSF China under 52221005. $^{\#}$The first two authors contributed equally. $^{*}$All correspondence should be sent to Shengbo Eben Li. {\tt\small Email: lishbo@mail.tsinghua.edu.cn}}
\thanks{$^{1}$State Key Laboratory of Intelligent Green Vehicle and Mobility, School of Vehicle and Mobility, Tsinghua University, Beijing, China.
$^{2}$SunRising AI Ltd, Beijing, China..
}%
\thanks{}
}

\maketitle
\begin{abstract}
Reinforcement learning (RL) has demonstrated remarkable capability in acquiring robot skills, but learning each new skill still requires substantial data collection for training. The pretrain-and-finetune paradigm offers a promising approach for efficiently adapting to new robot entities and tasks.
Inspired by the idea that acquired knowledge can accelerate learning new tasks with the same robot and help a new robot master a trained task, we propose a latent training framework where a transferable latent-to-latent locomotion policy is pretrained alongside diverse task-specific observation encoders and action decoders.
This policy in latent space processes encoded latent observations to generate latent actions to be decoded, with the potential to learn general abstract motion skills. To retain essential information for decision-making and control, we introduce a diffusion recovery module that minimizes information reconstruction loss during pretrain stage.
During fine-tune stage, the pretrained latent-to-latent locomotion policy remains fixed, while only the lightweight task-specific encoder and decoder are optimized for efficient adaptation.
Our method allows a robot to leverage its own prior experience across different tasks as well as the experience of other morphologically diverse robots to accelerate adaptation.
We validate our approach through extensive simulations and real-world experiments, demonstrating that the pretrained latent-to-latent locomotion policy effectively generalizes to new robot entities and tasks with improved efficiency.
\end{abstract}

\section{Introduction}

Legged robots are gaining increasing attention due to their ability to navigate complex terrains and perform tasks in environments that are challenging for wheeled or tracked robots. Their biological design allows them to traverse uneven surfaces\cite{bellicoso2018advances}, climb obstacles\cite{cheng2024extreme}, and maintain stability, making them ideal for applications in search and rescue, exploration\cite{lindqvist2022multimodality}, and industrial operations~\cite{arm2024pedipulate, zhang2024learning}. 
While optimization-based controllers such as model predictive control can handle structured environments, they still require meticulous platform-specific tuning and often fail to generalize across diverse morphologies or unstructured terrains. 

In contrast, learning-based methods have emerged as a promising alternative\cite{wellhausen2021rough}. 
These techniques employ a neural network as a controller, mapping the robot’s observations to control inputs. 

The most representative paradigm is imitation learning (IL) which trains the neural network to align the observation-action pairs collected from expects \cite{peng2018deepmimic, peng2018sfv, li2023learning}. Though there have been several remarkable demos achieved by IL, considerable high-quality data is still needed for continual performance improvement. RL does not rely on labeled data but learn optimal policies through trial and error by interacting with their environments~\cite{li2023rlbook, duan2021distributional}. This paradigm has been successfully supported across several domains including Chinese Go~\cite{schrittwieser2020mastering}, video games~\cite{vinyals2019grandmaster}. 
In legged robot locomotion field, RL has also demonstrated great potential in developing agile and robust locomotion skills directly from interaction with the environment \cite{peng2020learning} in both simulations \cite{liu2018learning,lee2019scalable,longhybrid, smith2022legged} and real-world environments \cite{hanna2017grounded}.
However, these learning-based locomotion controllers generally face two critical barriers: 
(1) the resulting policies are tightly coupled to a robot's specific kinematic design, hindering cross-platform transfer\cite{li2024experience}, and 
(2) the robot specific status gap like \textit{sim2real}, different motor conditions and robot loads persists, causing the dynamic mismatches between different entities \cite{tan2018sim, margolis2023walk}.

\begin{figure}[t]
\centering
\includegraphics[width=0.9\linewidth]{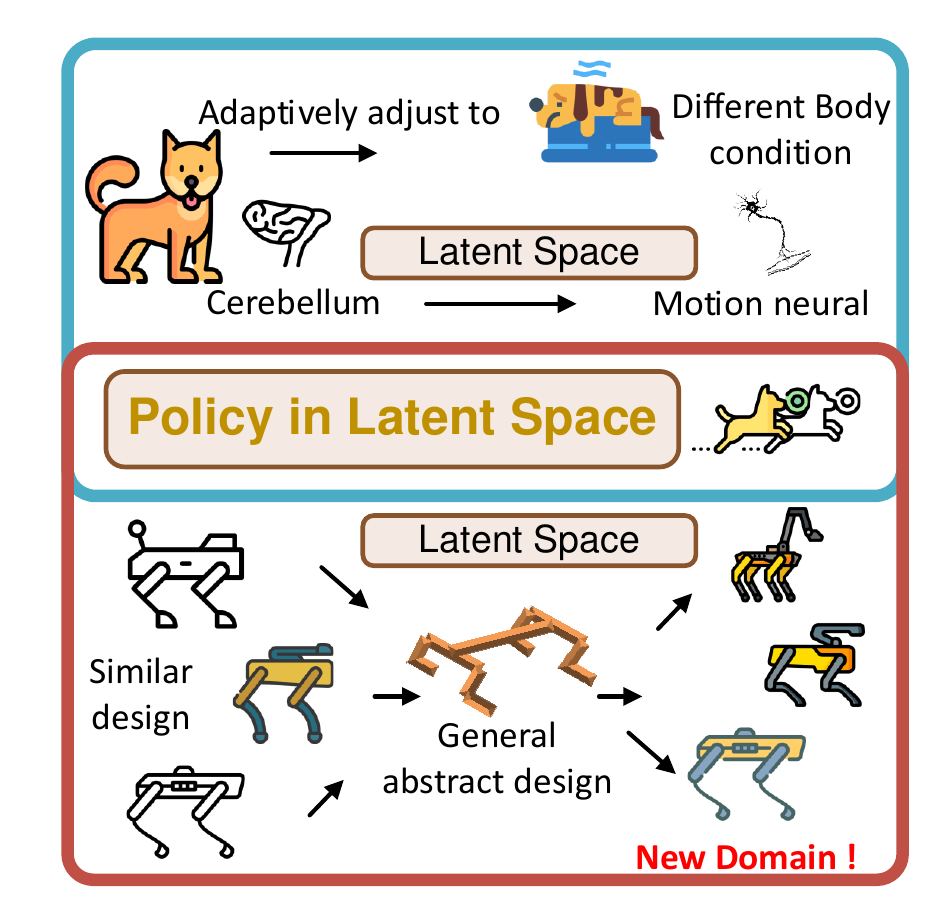}
\caption{
Biological inspiration for latent-to-latent locomotion policy.
}
\label{fig:bio_inspiration}
\end{figure}

Generally speaking, in many practical scenarios, the requisite training conditions are either inaccessible or insufficient \cite{chaffre2020sim}. 
For example, it is often impractical to train a robot in every possible complex or unstructured environment. 
This limitation underscores the need for approaches that can leverage pre-trained models or shared knowledge to improve the adaptation efficiency.
We can draw inspiration from humans and animals that can maintain an unconscious, latent model of their body, which is continuously adjusted in the cerebellum and translated into motor actions through neural signals as shown in Fig. \ref{fig:bio_inspiration}. 
Thereby a natural idea rises:
\begin{quote}
    \emph{Can we abstract both observation and action into the latent space to enable more efficient policy transfer?}
\end{quote}
Hence we propose a latent training framework, \textbf{L3P} (Latent-to-Latent Locomotion Policy) to enable skill transfer and platform adaptation.
Unlike traditional approaches that directly map robot-specific observation to actions, our framework maps raw observations through robot-specific encoder from diverse robots—regardless of their physical differences—into a shared latent space.
Then the policy outputs latent actions that serve as prototypes of robot behavior, which can be decoded as real control command by task-specific action decoder. 
Therefore, the trained latent-to-latent has the potential to contain general abstract motion skills and can be efficiently transferred.
To ensure that both the latent observations and latent actions capturing sufficient information for effective decision-making, we integrate a \emph{diffusion recovery module} that leverages the powerful multi-modal fitting capability of diffusion models. 
This module guarantees that the latent representations are capable of faithfully recovering the original information, thus enabling seamless encoder-decoder switching and cross-platform policy transfer.

Our contributions are summarized as follows:
\begin{itemize}
    \item We propose a Latent-to-Latent Locomotion policy (L3P) framework that decomposes control into observation encoder, latent policy, and action decoder. This design enables the observation encoder to adapt across diverse terrains while the action decoder can be tailored to different robotic individuals, thereby learning core universal skills that are readily reusable. Our design allows a robot to leverage its own prior experience across different tasks as well as the experience of other morphologically diverse robots to accelerate adaptation.
    \item We introduce a diffusion recovery module to ensure accurate reconstruction of latent states, maintaining consistency between the encoder-decoder pairs and ensuring that the latent representations capture sufficient critical information for decision-making. By utilizing a diffusion-based recovery module and joint optimizing policy performance and latent reconstruction loss, our approach ensures that the latent-to-latent policy encapsulates well-established skills and is ready for transfer without information loss. 
    \item Extensive experiments in simulation and on real-world platforms show that our approach achieves up to 2$\times$ higher zero-shot transfer success rates and adapts 3$\times$ faster to new platforms compared to baseline RL policies.
\end{itemize}


\begin{figure*}
    \centering
    \includegraphics[width=1.0\linewidth]{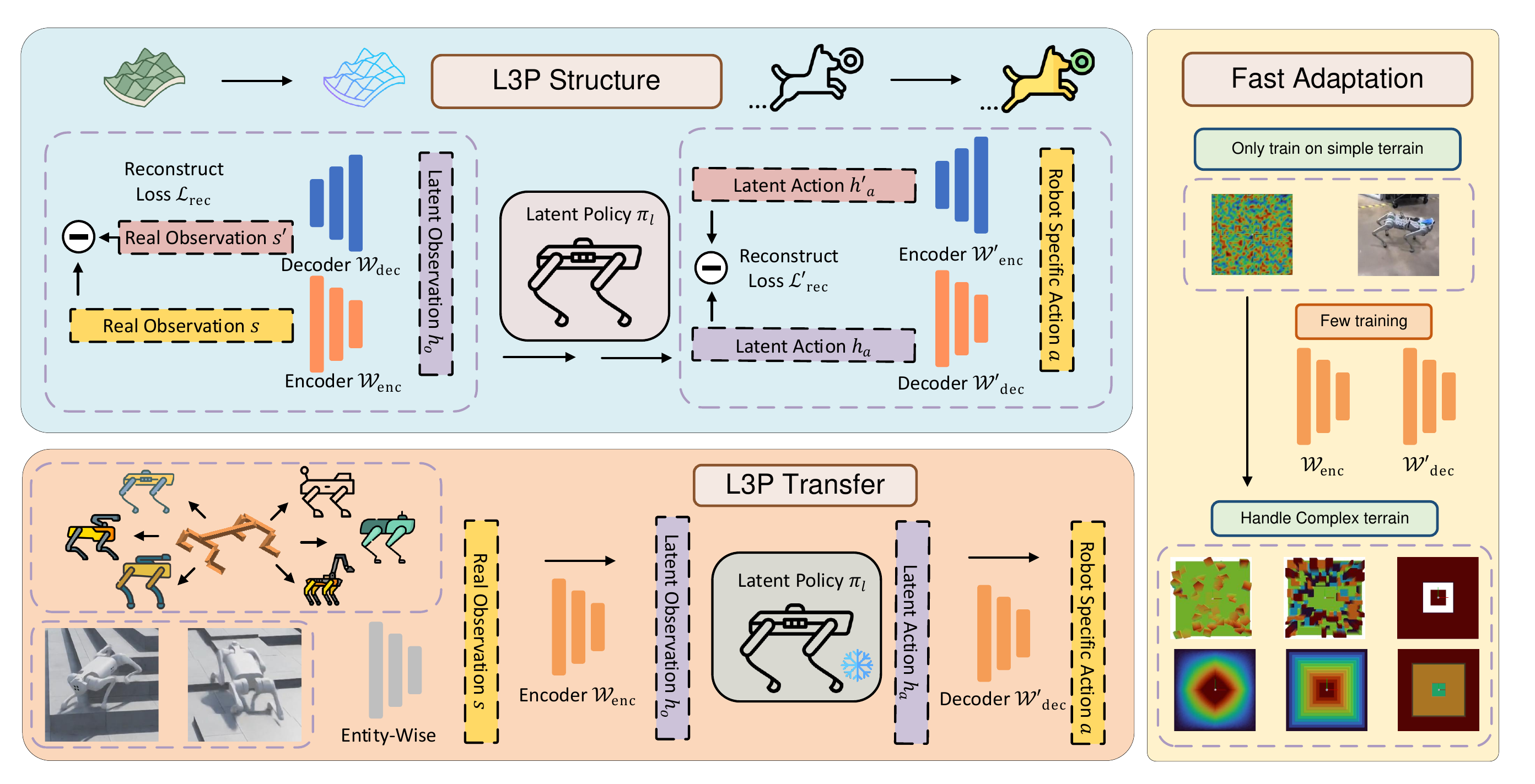}
    \caption{
    Overview of the L3P framework, which enables transferable locomotion control across diverse legged robots. 
    The framework consists of three key modules: (1) an observation latent encoder that maps raw sensory inputs into a shared latent space, (2) a latent policy backbone that learns a generalizable control strategy, and (3) a latent action decoder that translates latent actions into robot-specific motor commands. 
    Recovery modules for both observation and action ensure latent space consistency. The framework is trained in three stages: (i) latent space definition and alignment, (ii) transfer from a single entity to diverse robots, and (iii) zero-shot generalization from simple to complex terrains.
    }
    \label{fig:framework}
\end{figure*}

\section{Preliminaries}
\label{sec:preliminary}
\subsection{Reinforcement Learning}
We consider the standard Markov Decision Process setting, where an agent interacts with an environment defined by a state space $\mathcal{S}$, an action space $\mathcal{A}$, and a reward function $r(s,a)$, with the goal of maximizing the expected cumulative reward:
\begin{equation}
J(\pi) = \mathbb{E}_{\pi}\left[\sum_{t=0}^{T} \gamma^t\, r(s_t, a_t)\right],
\end{equation}
where $\gamma \in [0,1)$ is the discount factor.

\subsection{Diffusion Models for Recovery}
Diffusion models have proven effective at capturing complex, multimodal distributions. Given a latent action $z^0$, the forward diffusion process incrementally adds Gaussian noise:
\begin{equation}
\label{eq:df_forward}
q(z^t\,|\,z^{t-1}) = \mathcal{N}\left(z^t; \sqrt{1-\beta_t}\,z^{t-1},\,\beta_t\mathbf{I}\right),
\end{equation}
where $\beta_t$ denotes the noise variance schedule and $t=1,\dots,T$. The reverse process, parameterized as
\begin{equation}
\label{eq:df_backward}
p_\theta(z^{t-1}\,|\,z^t) = \mathcal{N}\Big(z^{t-1}; \mu_\theta(z^t,t),\, \Sigma_\theta(z^t,t)\Big),
\end{equation}
is employed to reconstruct the original information from its noisy counterpart. In our framework, this reverse diffusion process serves as the \emph{diffusion recovery module} for more concisely reconstruction from the latent signal to the real-world signal thanks to the powerful multimodal fitting capability of diffusion model.
\section{Method}
\label{sec:method}

In this section, we present our Latent-to-Latent Locomotion Policy (L3P) framework, which enables effective locomotion control across diverse robotic platforms. 

Our approach is built upon two core components: 
(1) the L3P architecture, which establishes a shared latent space and ensures its consistency through a recovery module, and  
(2) a transfer training strategy that facilitates adaptation from a single prototype to multiple entities and enables zero-shot generalization from simple to complex terrains. 
Each component corresponds to a distinct stage in the training process, ensuring a structured and scalable learning pipeline.

\subsection{Latent-to-Latent Policy Structure}

Fig.~\ref{fig:framework} illustrates our Latent-to-Latent Locomotion Policy (L3P) framework, which consists of three primary modules: 
an observation latent encoder, a latent-to-latent policy backbone, and a latent action decoder. 
Each encoder and decoder pair is complemented by a recovery module, ensuring that the latent representations are both information-rich and consistently aligned. 
In our architecture, the policy operates entirely in the latent space by mapping latent observations to latent actions, which are then decoded into robot-specific commands.

\subsubsection{Latent Space Definition and Alignment}

\textbf{Our framework searches for an optimal latent representation during policy optimization. }
In standard end-to-end training, intermediate layer outputs serve as latent representations\cite{li2022human}, yet their information content may vary due to task specifics, object characteristics, and training randomness. 
To ensure that the latent space consistently encapsulates all task-relevant features, we introduce a recovery module with an associated reconstruction loss. 
This additional constraint forces the encoder to discover a latent representation that is both sufficiently rich and robust—an optimal latent that minimizes the policy loss while retaining the necessary information.

We formalize this search as a joint optimization problem. Let $ s \in \mathcal{S} $ be an observation, and let the observation encoder $ W_{enc} $ map $ s $ into a latent representation $ h = W_{enc}(s) $. 
The policy then operates in the latent space, and we denote its objective as $ \mathcal{L}_{policy}(h) $. 
Concurrently, a recovery module $ W_{dec} $ attempts to reconstruct the latent representation, producing $s' = W_{dec}(z)$, with a reconstruction loss defined by
\begin{equation}
\mathcal{L}^s_{rec} = \|s - s'\|^2.
\end{equation}
Similarly for the action encoder-decoder pair, there also exists a reconstruction loss as
\begin{equation}
\mathcal{L}^a_{rec} = \|h_a - h'_a\|^2.
\end{equation}
The overall objective function becomes
\begin{equation}
\begin{aligned}
    \mathcal{L} &= \mathcal{L}_{policy}(h) + \lambda\, (\mathcal{L}^s_{rec} + \mathcal{L}^a_{rec}), \\
    &= \mathcal{L}_{policy}(h) + \lambda\, \mathcal{L}_{rec},
\end{aligned}
\end{equation}
where $\lambda$ is a hyper-parameter that balances policy performance and latent alignment.

\begin{theorem}[Optimal Latent Space Alignment]
\label{thm:latent_opt_align}
Assume that $ \mathcal{L}_{policy}(h) $ is a differentiable function that measures policy performance in the latent space, and $\mathcal{L}_{rec}$ enforces the recover-ability of $ z $. Then, under joint minimization of $\mathcal{L} = \mathcal{L}_{policy}(h) + \lambda\, \mathcal{L}_{rec}$, the training process implicitly searches for an optimal latent representation $ h^* $ that is a sufficient statistic for the observation $ s $ and minimizes the policy loss. Formally, if $ h^* $ is such that
\begin{equation}
h^* = \arg\min_{h} \left\{ \mathcal{L}_{policy}(h) + \lambda\, \mathcal{L}_{rec} \right\},
\end{equation}
then $ h^* $ is both discriminative for decision-making and stable in the sense that its reconstruction error is minimized.
\end{theorem}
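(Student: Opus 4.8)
The plan is to decompose the statement into its two assertions — that $h^*$ is a sufficient statistic for $s$, and that it is simultaneously discriminative for decision-making with minimized reconstruction error — to prove each from one of the two terms in the combined objective, and then to glue them together through the first-order optimality condition at the joint minimizer. Since $\mathcal{L}_{policy}$ is assumed differentiable and $\mathcal{L}_{rec}$ is a squared Euclidean loss (hence smooth in the encoder and decoder outputs), I would first record that any interior minimizer $h^*$ of $\mathcal{L} = \mathcal{L}_{policy}(h) + \lambda\,\mathcal{L}_{rec}$ satisfies $\nabla_h \mathcal{L}_{policy}(h^*) + \lambda\,\nabla_h \mathcal{L}_{rec}(h^*) = 0$. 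This stationarity already encodes the balance the statement refers to: at $h^*$ the marginal gain in policy performance is exactly offset by the marginal cost in recoverability, so neither objective can be improved unilaterally.

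For the sufficiency claim I would argue that the reconstruction term controls information retention. The key step is to lower-bound the mutual information $I(h;s)$ by a decreasing function of $\mathcal{L}^s_{rec}$: if there exists a decoder $W_{dec}$ making $\mathbb{E}\,\|s - W_{dec}(h)\|^2$ small, then by a standard rate-distortion argument $s$ is approximately a deterministic function of $h$ on the data support, so the encoder $W_{enc}$ is essentially injective and no task-relevant coordinate of $s$ is collapsed. Invoking the data-processing inequality, for any decision target $Y$ derived from $s$ one has $I(h;Y) \le I(s;Y)$, with equality exactly when $h$ is a sufficient statistic; driving $\mathcal{L}_{rec}$ toward its minimum forces this equality, which is the precise sense in which $h^*$ retains all the information in $s$.

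The discriminativeness and stability claims then follow from the two terms directly. Minimizing $\mathcal{L}_{policy}(h)$ over representations constrained to remain recoverable selects the $h^*$ whose coordinates are maximally predictive of the optimal control, giving the discriminative property; and because $h^*$ minimizes the full objective $\mathcal{L}$, it in particular attains the smallest reconstruction error compatible with the policy optimum, which is the stated stability. Combining this with the sufficiency argument at the common stationary point $h^*$ yields both conclusions simultaneously.

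The hard part will be making the information-theoretic step genuinely rigorous. The squared-error reconstruction loss only bounds information retention under regularity assumptions on the data distribution and the decoder class, so a fully honest version would either restrict to a deterministic encoder that is invertible on the support of $s$, or replace the exact notion of \emph{sufficient statistic} with an $\epsilon$-sufficiency guarantee whose slack vanishes as $\mathcal{L}_{rec} \to 0$. I would therefore frame the result as characterizing the limiting behavior of $h^*$ and state the sufficiency conclusion up to this controllable reconstruction slack, rather than as an exact equality.
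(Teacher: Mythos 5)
Your route is genuinely different from the paper's. The paper's own proof is a three-sentence verbal argument: it asserts that ``as training converges, the reconstruction loss approaches zero,'' concludes $W_{dec}(h^*) \approx s$ and hence sufficiency, and then separately asserts that $h^*$ ``also minimizes $\mathcal{L}_{policy}(h)$.'' You instead go through first-order stationarity, a rate-distortion lower bound relating $I(h;s)$ to $\mathcal{L}^s_{rec}$, the data-processing inequality $I(h;Y) \le I(s;Y)$ with equality characterizing sufficiency, and an honest $\epsilon$-sufficiency reformulation. That machinery is a sound plan, and you are the only one of the two who notices the real issue: a squared-error reconstruction loss yields at best approximate sufficiency, with slack controlled by the achieved value of $\mathcal{L}_{rec}$.

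However, your third paragraph inherits the paper's central flaw and contradicts your own first paragraph. Your stationarity condition $\nabla_h \mathcal{L}_{policy}(h^*) = -\lambda\, \nabla_h \mathcal{L}_{rec}(h^*)$ shows that, generically, neither gradient vanishes at $h^*$: the minimizer of a weighted sum is a trade-off point, not a simultaneous minimizer of both terms. So the claims that $h^*$ ``attains the smallest reconstruction error compatible with the policy optimum'' and that it minimizes $\mathcal{L}_{policy}$ subject to a recoverability constraint do not follow from minimizing $\mathcal{L}$; for finite $\lambda$ and conflicting objectives, $\mathcal{L}_{rec}(h^*)$ can be bounded away from zero, which also undercuts your data-processing step unless you add a realizability hypothesis (zero reconstruction error achievable without degrading the policy term) or pass to $\lambda \to \infty$. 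The paper's proof commits exactly this error by fiat, so your attempt is no weaker than the published argument; the fix you already sketch --- stating the conclusion as $\epsilon$-sufficiency with $\epsilon$ vanishing as $\mathcal{L}_{rec}(h^*) \to 0$, while making the small-reconstruction-error premise an explicit assumption rather than a purported consequence of joint minimization --- is what a rigorous version of this theorem would actually require.
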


\begin{proof}
Consider the joint loss function
\begin{equation}
\mathcal{L}(h) = \mathcal{L}_{policy}(h) + \lambda\, \mathcal{L}_{rec}.
\end{equation} 
Let $ h^* $ denote the latent representation that minimizes $\mathcal{L}(h)$. As training converges, the reconstruction loss $\|s - W_{dec}(h^*)\|^2$ approaches zero, implying that $ W_{dec}(h^*) \approx s $. 
Similarly we can also conclude $ h_a \approx h’_a $.
This convergence guarantees that $ h^* $ retains all the essential information from $ s $, making it a sufficient statistic. Concurrently, since $ h^* $ also minimizes $\mathcal{L}_{policy}(h)$, it is optimized for decision-making. Thus, the joint minimization ensures that $ h^* $ is both stable (low reconstruction error) and effective (low policy loss). This completes the proof.
\end{proof}

By enforcing this latent space alignment through the recovery module, our framework ensures that the latent representations are consistent and information-rich.
Both the action and observation perform this mechanism where the observation is reconstructed from latent and the latent action is reconstructed from action.

Detail to the training process, at the start of training, we select a simple-to-train entity and corresponding tasks requiring basic abilities. 
Hence the learned policy could perform enough basic actions and corresponding latent actions.
Once converged, we consider the latent action receives the latent observation and output latent action in target space.

\subsubsection{Diffusion Recovery Module}  
To mitigate the ambiguity where a single latent vector may map to multiple real-world states—especially in the observation space—we incorporate a diffusion-based recovery module.  
This module employs a U-Net \cite{ho2020denoising}, using the latent variable $ h $ as a conditioning input to progressively denoise from uniform noise.  
By harnessing the inherent multi-modal nature of diffusion models, it enhances the expressiveness and consistency of latent representations.

\subsection{Transfer Training}

Our L3P framework is designed to facilitate transfer learning in two key scenarios: (1) transferring from a single entity to diverse entities, and (2) transferring from simple to complex terrains.

\subsubsection{Single Entity to Diverse Entities}
Initially, the entire framework is trained on a single robotic platform across a wide range of environments to establish a robust latent space and to learn a universal latent policy $ \pi_{l} $. 
Once this latent space is defined, the latent policy is frozen. 
When transferring to a new robotic platform, only the observation encoder $ E_{obs} $ and the action decoder $ D_{act} $ are fine-tuned to align with the pre-established latent space. 
This fine-tuning process ensures that the new platform’s sensorimotor characteristics are accurately mapped into the latent space, thereby enabling rapid adaptation without re-learning the core locomotion skills.

\subsubsection{Simple Terrain to Difficult Terrain}
In scenarios where training on complex terrains is impractical, we perform fine-tuning on a simpler terrain (e.g., flat ground) for a given robotic platform.
During this phase, the observation encoder and action decoder are adjusted so that they conform to the latent space defined by the fixed latent policy. 
Although the fine-tuning occurs on a less challenging environment, the robust latent policy—enhanced by the diffusion recovery module—can generate latent actions that capture the necessary modalities. 
These aligned latent actions generalize well when the robot is subsequently deployed in more complex terrains, ensuring high performance even under challenging conditions.



\begin{figure}
    \centering
    \includegraphics[width=1\linewidth]{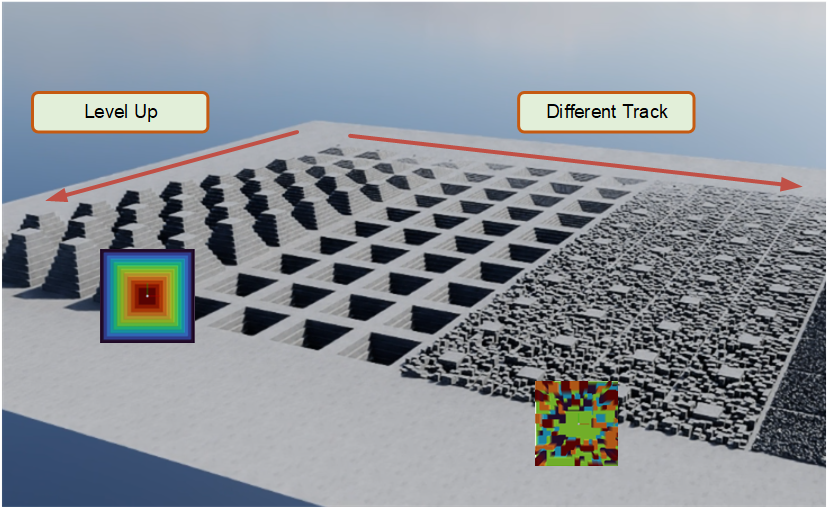}
    \caption{
    Overview of the simulation playground used for training, implemented in Isaac Sim. 
    The environment is structured such that different tracks are arranged horizontally, while terrain difficulty increases progressively in the vertical direction. 
    As the level rises, the tasks become increasingly challenging.
    }
    \label{fig:playground}
\end{figure}

\begin{table}[t]
\centering
\caption{Terrain Configuration Values}
\label{tab:terrain_values}
\begin{threeparttable}
\begin{tabular}{lll}
\toprule
\textbf{Terrain Type} & \textbf{Parameter} & \textbf{Value} \\
\midrule
Pyramid Stairs & step height range & (0.05, 0.8) \\
 & step width & 0.3 \\
 & platform width & 3.0 \\
 & border width & 1.0 \\
 & holes & False \\
\midrule
Inverted Pyramid Stairs & step height range & (0.05, 0.8) \\
 & step width & 0.3 \\
 & platform width & 3.0 \\
 & border width & 1.0 \\
 & holes & False \\
\midrule
Boxes & grid height range & (0.05, 0.8) \\
 & grid width & 0.45 \\
 & platform width & 2.0 \\
\midrule
Random Rough & noise range & (0.02, 0.30) \\
 & noise step & 0.02 \\
 & border width & 0.25 \\
\midrule
Pyramid Slope & slope range & (0.0, 0.8) \\
 & platform width & 2.0 \\
 & border width & 0.25 \\
\midrule
Inverted Pyramid Slope & slope range & (0.0, 0.8) \\
 & platform width & 2.0 \\
 & border width & 0.25 \\
\bottomrule
\end{tabular}
\vspace{3pt}
\begin{tablenotes}\footnotesize
\item[*] The range parameter is responsible for the terrain difficulty, where higher the value is the task is more difficult.
\end{tablenotes}
\end{threeparttable}
\end{table}

\begin{figure*}[t]
\centering
\begin{subfigure}[b]{0.24\textwidth}
    \includegraphics[width=\textwidth]{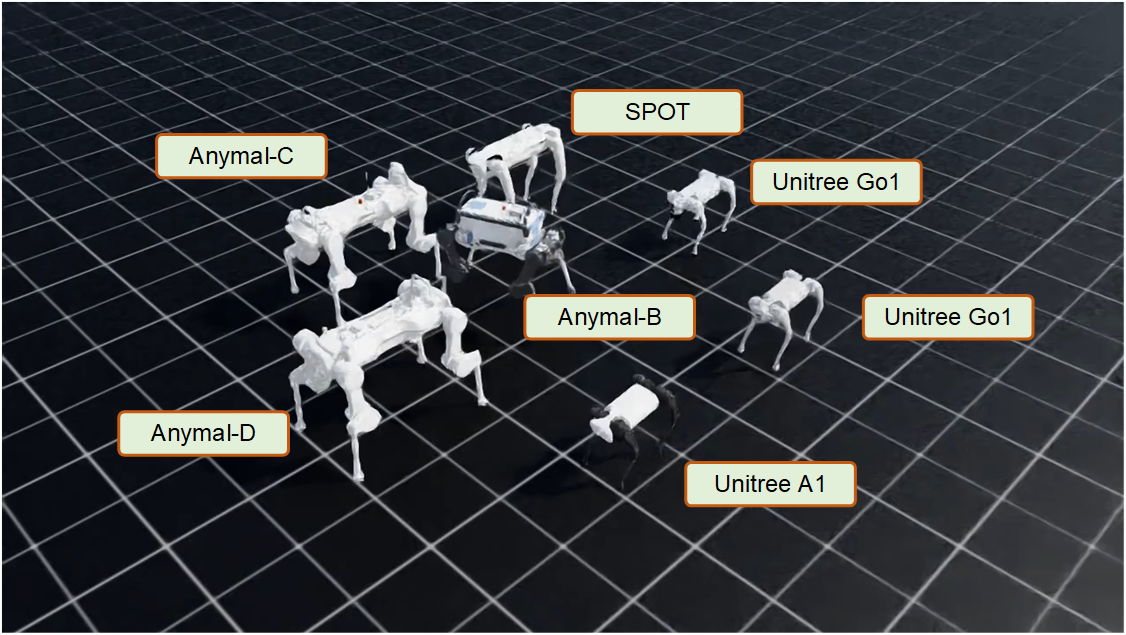}
    \caption{Sketch of entities}
    \label{subfig:entities}
\end{subfigure}
\hfill
\begin{subfigure}[b]{0.24\textwidth}
    \includegraphics[width=\textwidth,trim={0.9cm 0.1cm 1.0cm 1.0cm}, clip]{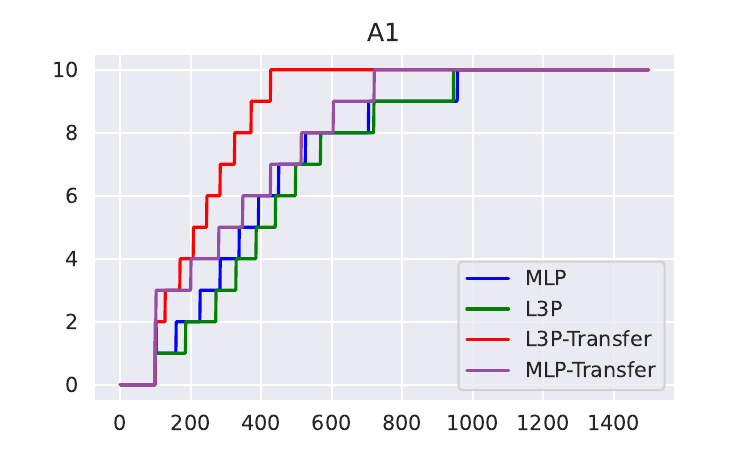}
     \caption{A1}
\end{subfigure}
\hfill
\begin{subfigure}[b]{0.24\textwidth}
    \includegraphics[width=\textwidth,trim={0.9cm 0.1cm 1.0cm 1.0cm}, clip]{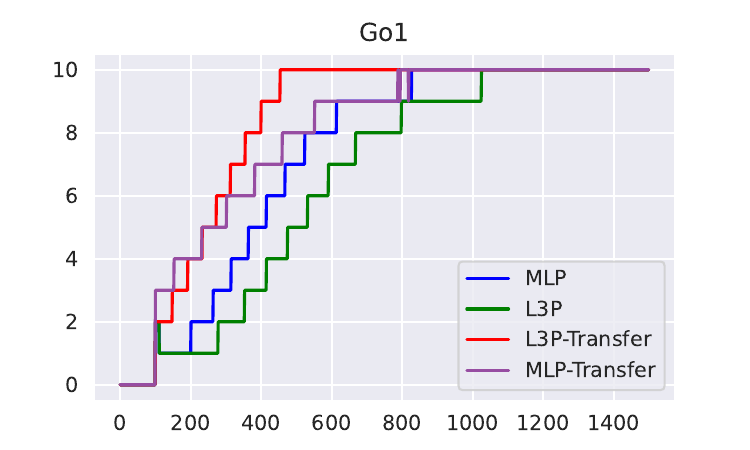}
     \caption{Go1}
\end{subfigure}
\hfill
\begin{subfigure}[b]{0.24\textwidth}
    \includegraphics[width=\textwidth,trim={0.9cm 0.1cm 1.0cm 1.0cm}, clip]{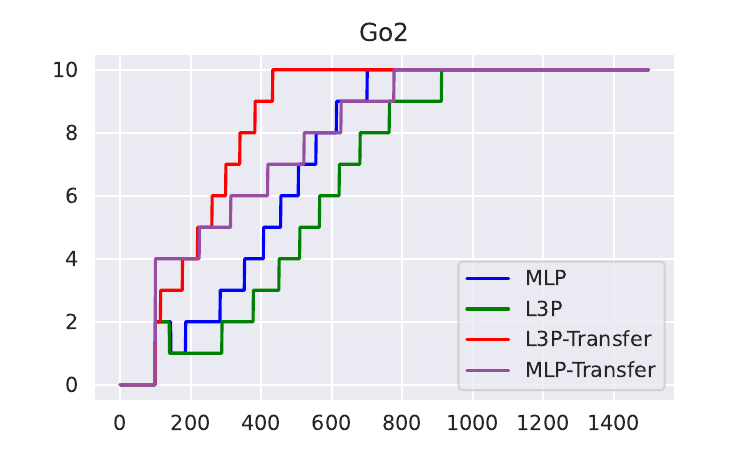}
     \caption{Go2}
\end{subfigure}

\vspace{0cm}

\begin{subfigure}[b]{0.24\textwidth}
    \includegraphics[width=\textwidth,trim={0.9cm 0.1cm 1.0cm 1.0cm}, clip]{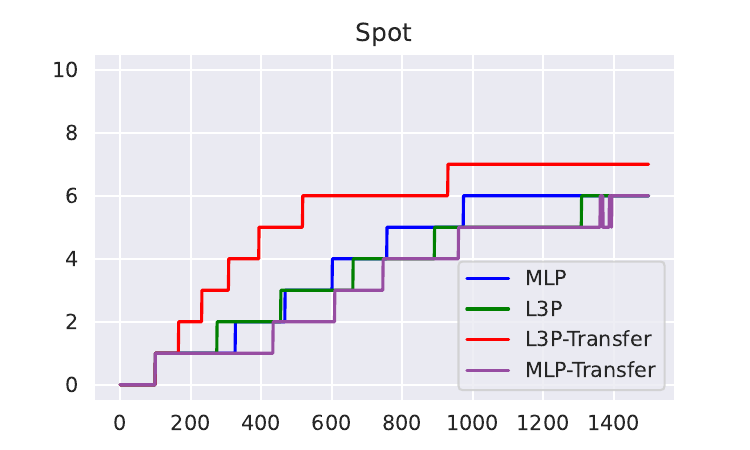}
     \caption{Spot}
\end{subfigure}
\hfill
\begin{subfigure}[b]{0.24\textwidth}
    \includegraphics[width=\textwidth,trim={0.9cm 0.1cm 1.0cm 1.0cm}, clip]{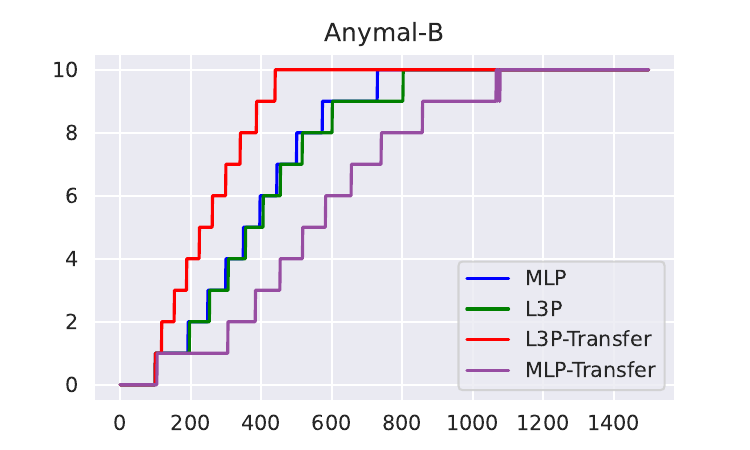}
     \caption{Anymal-B}
\end{subfigure}
\hfill
\begin{subfigure}[b]{0.24\textwidth}
    \includegraphics[width=\textwidth,trim={0.9cm 0.1cm 1.0cm 1.0cm}, clip]{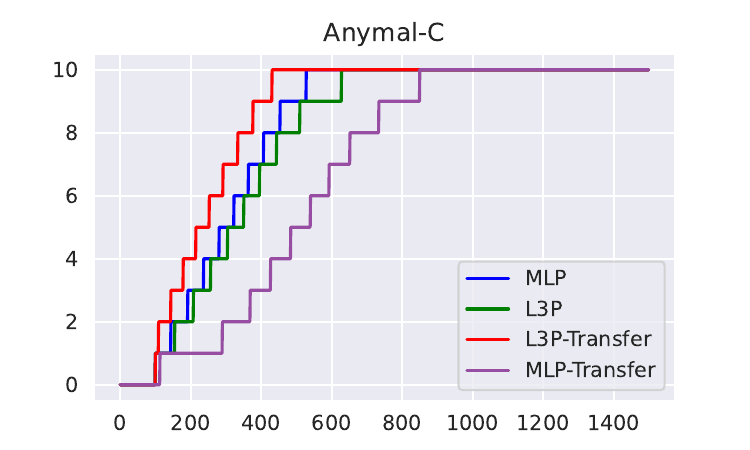}
     \caption{Anymal-C}
\end{subfigure}
\hfill
\begin{subfigure}[b]{0.24\textwidth}
    \includegraphics[width=\textwidth,trim={0.9cm 0.1cm 1.0cm 1.0cm}, clip]{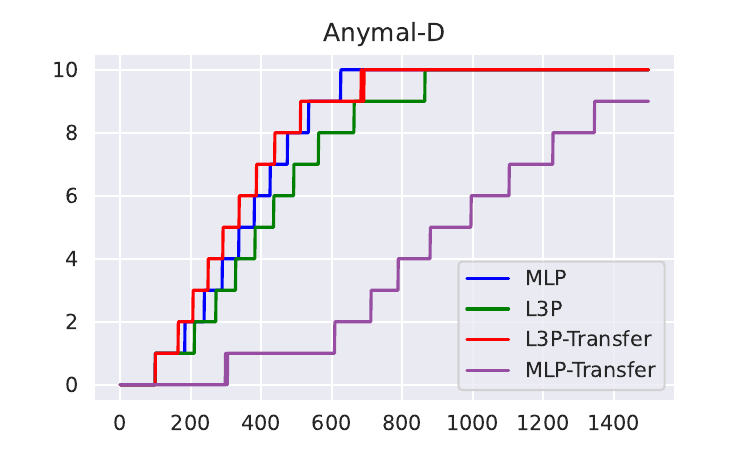}
     \caption{Anymal-D}
\end{subfigure}
\caption{
Performance verification via multi-type legged robots. We conduct experiment on seven classical legged robot (A1, Go1, Go2, Anymal-B, Anymal-C, Anymal-D \cite{hutter2016anymal}, Spot) and record the accomplished difficulty level during the training. Each curve is the average result between 5 seeds.}
\label{fig:main_exp_transfer}
\end{figure*}

\begin{table*}[t]
\centering
\caption{Accuracy and Performance Comparison}
\label{tab.sim_result_large}
\begin{threeparttable}
\begin{tabular}{lc|ccc|ccc|cc}
\toprule
\midrule
& & \multicolumn{3}{c}{Level Metric} & \multicolumn{3}{c}{Reward Metric} & \multicolumn{2}{c}{Tracking Accuracy}\\
Task & Method & 85\% Cv.$\downarrow$ & 95\% Cv.$\downarrow$ & Level $\uparrow$ & 85\% Cv.$\downarrow$ & 95\% Cv.$\downarrow$ & Reward $\uparrow$ & Vel. Err.$\downarrow$ & Yaw Err.$\downarrow$ \\
\midrule

\multirow{4}{*}{A1} 
& MLP          & 796.00  & 1156.00 & 5.88  $\pm$ 0.28 & 851.00  & 1328.00 & 22.04  $\pm$ 2.04 & 1.77 \% $\pm$ 0.12 & 2.01 \% $\pm$ 0.06 \\ 
& L3P          & 760.00  & 978.00  & 5.77  $\pm$ 0.07 & 740.00  & 1202.00 & 23.16  $\pm$ 0.95 & 1.74 \% $\pm$ 0.06 & 2.01 \% $\pm$ 0.09 \\ 
& MLP-Trans. & 648.00  & 933.00  & \textbf{5.98}  $\pm$ 0.23 & 545.00  & 1337.00 & \textbf{24.56}  $\pm$ 0.82 & \textbf{1.16} \% $\pm$ 0.10 & \textbf{0.97} \% $\pm$ 0.06 \\ 
& L3P-Trans. & \textbf{394.00}  & \textbf{462.00}  & 5.95  $\pm$ 0.03 & \textbf{179.00}  & \textbf{226.00}  & 24.36  $\pm$ 0.63 & 1.78 \% $\pm$ 0.04 & 1.97 \% $\pm$ 0.03 \\ 
\midrule
\multirow{4}{*}{Go1}
& MLP          & 691.00  & 975.00  & 5.95  $\pm$ 0.10 & 463.00  & 1238.00 & 23.42  $\pm$ 0.83 & 1.75 \% $\pm$ 0.06 & 1.70 \% $\pm$ 0.11 \\ 
& L3P          & 830.00  & 1078.00 & 5.78  $\pm$ 0.06 & 664.00  & 1190.00 & 23.14  $\pm$ 1.71 & 1.73 \% $\pm$ 0.05 & 1.76 \% $\pm$ 0.04 \\ 
& MLP-Trans. & 591.00  & 930.00  & 5.85  $\pm$ 0.28 & 296.00  & 1086.00 & 23.45  $\pm$ 2.68 & 1.26 \% $\pm$ 0.05 & \textbf{0.88} \% $\pm$ 0.02 \\ 
& L3P-Trans. & \textbf{428.00}  & \textbf{512.00}  & \textbf{6.02}  $\pm$ 0.06 & \textbf{181.00}  & \textbf{234.00}  & \textbf{24.45}  $\pm$ 0.87 & \textbf{1.17} \% $\pm$ 0.04 & 1.72 \% $\pm$ 0.08 \\ 
\midrule
\multirow{4}{*}{Go2}
& MLP          & 663.00  & 1080.00 & 6.10  $\pm$ 0.05 & 615.00  & 1106.00 & 26.19  $\pm$ 0.36 & 1.77 \% $\pm$ 0.06 & 1.93 \% $\pm$ 0.10 \\ 
& L3P          & 780.00  & 948.00  & 5.76  $\pm$ 0.09 & 1015.00 & 1291.00 & 25.30  $\pm$ 0.51 & 1.69 \% $\pm$ 0.06 & 1.92 \% $\pm$ 0.12 \\ 
& MLP-Trans. & 678.00  & 890.00  & 5.96  $\pm$ 0.10 & 672.00  & 1140.00 & 25.67  $\pm$ 1.43 & 1.38 \% $\pm$ 0.24 & \textbf{1.48} \% $\pm$ 0.61 \\ 
& L3P-Trans. & \textbf{412.00}  & \textbf{477.00}  & \textbf{6.08}  $\pm$ 0.04 & \textbf{214.00}  & \textbf{279.00}  & \textbf{25.88}  $\pm$ 0.43 & \textbf{1.25} \% $\pm$ 0.06 & 1.96 \% $\pm$ 0.05 \\ 
\midrule
\multirow{4}{*}{Any.B}
& MLP          & 609.00  & 825.00  & \textbf{5.86}  $\pm$ 0.04 & 292.00  & 378.00  & 9.57   $\pm$ 0.63 & 2.00 \% $\pm$ 0.10 & 1.93 \% $\pm$ 0.08 \\ 
& L3P          & 642.00  & 917.00  & 5.85  $\pm$ 0.02 & 313.00  & 402.00  & 9.85   $\pm$ 0.26 & \textbf{1.92} \% $\pm$ 0.01 & 1.96 \% $\pm$ 0.14 \\ 
& MLP-Trans. & 897.00  & 1136.00 & 5.84  $\pm$ 0.00 & 514.00  & 848.00  & 9.75   $\pm$ 0.00 & 1.93 \% $\pm$ 0.00 & \textbf{1.67} \% $\pm$ 0.00 \\ 
& L3P-Trans. & \textbf{399.00}  & \textbf{459.00}  & 5.83  $\pm$ 0.02 & \textbf{147.00}  & \textbf{174.00}  & \textbf{13.20}  $\pm$ 0.51 & 1.96 \% $\pm$ 0.10 & 1.88 \% $\pm$ 0.04 \\ 
\midrule
\multirow{4}{*}{Any.C}
& MLP          & 499.00  & 624.00  & 6.10  $\pm$ 0.02 & 221.00  & 299.00  & 12.02  $\pm$ 0.43 & 1.93 \% $\pm$ 0.05 & 1.78 \% $\pm$ 0.02 \\ 
& L3P          & 569.00  & 829.00  & 6.07  $\pm$ 0.04 & 266.00  & 357.00  & 11.38  $\pm$ 0.21 & 2.00 \% $\pm$ 0.07 & \textbf{1.73} \% $\pm$ 0.06 \\ 
& MLP-Trans. & 817.00  & 1035.00 & \textbf{6.17}  $\pm$ 0.00 & 457.00  & 593.00  & 10.55  $\pm$ 0.00 & 2.71 \% $\pm$ 0.00 & 2.04 \% $\pm$ 0.00 \\ 
& L3P-Trans. & \textbf{401.00}  & \textbf{488.00}  & 5.99  $\pm$ 0.06 & \textbf{123.00}  & \textbf{154.00}  & \textbf{13.41}  $\pm$ 0.97 & \textbf{1.92} \% $\pm$ 0.05 & 1.79 \% $\pm$ 0.12 \\ 
\midrule
\multirow{4}{*}{Any.D}
& MLP          & 588.00  & 803.00  & \textbf{6.12}  $\pm$ 0.03 & 307.00  & 408.00  & 11.28  $\pm$ 0.67 & 1.88 \% $\pm$ 0.08 & 1.70 \% $\pm$ 0.03 \\ 
& L3P          & 763.00  & 1039.00 & 6.03  $\pm$ 0.05 & 403.00  & 945.00  & 10.42  $\pm$ 0.43 & 1.86 \% $\pm$ 0.05 & 1.74 \% $\pm$ 0.04 \\ 
& MLP-Trans. & 1285.00 & 1414.00 & 5.28  $\pm$ 0.00 & 1233.00 & 1431.00 & 9.80   $\pm$ 0.00 & 1.95 \% $\pm$ 0.00 & 1.79 \% $\pm$ 0.00 \\ 
& L3P-Trans. & \textbf{519.00}  & \textbf{695.00}  & 5.69  $\pm$ 0.10 & \textbf{172.00}  & \textbf{229.00}  & \textbf{11.49}  $\pm$ 1.06 & \textbf{1.84} \% $\pm$ 0.01 & \textbf{1.70} \% $\pm$ 0.07 \\ 
\midrule
\multirow{4}{*}{Spot}
& MLP          & 834.00    & 1283.00 & 3.54  $\pm$ 0.03 & 261.00  & 389.00  & 162.26 $\pm$ 13.0 & 3.83 \% $\pm$ 0.07 & 2.10 \% $\pm$ 0.03 \\ 
& L3P          & 877.00    & 936.00    & 3.24  $\pm$ 0.02 & 258.00  & 434.00  & 160.40 $\pm$ 5.47 & 3.82 \% $\pm$ 0.09 & 2.12 \% $\pm$ 0.06 \\ 
& MLP-Trans. & 897.00    & 1344.00    & 3.20  $\pm$ 0.00 & 429.00  & 600.00  & 135.26 $\pm$ 0.00 & \textbf{2.17} \% $\pm$ 0.00 & \textbf{1.34} \% $\pm$ 0.00 \\ 
& L3P-Trans. & \textbf{412.00}    & \textbf{864.00}  & \textbf{3.74}  $\pm$ 0.02 & \textbf{171.00}  & \textbf{217.00}  & \textbf{181.99} $\pm$ 3.79 & 3.75 \% $\pm$ 0.04 & 2.12 \% $\pm$ 0.07 \\ 

\midrule
\bottomrule
\end{tabular}

\vspace{3pt}
\begin{tablenotes}\footnotesize
\item[*] All experiments are conducted with 5 random seeds. The reported values in the table represent the average results along with their variance.  
\item[*] \textbf{Cv.} stands for convergence step; \textbf{Trans.} stands for Transfer.
\item[*] The convergence step refers to the number of update steps required to reach a certain percentage of the final rewards or curriculum threshold, reflecting the convergence capability.  
\item[*] \textbf{Err.} denotes tracking error, while \textbf{Vel.} represents x-y velocity. 
\item[*] Tracking error for x-y velocity and yaw is measured as the deviation between the given command and the actual state, indicating the policy's tracking accuracy. However, tracking difficulty varies across different levels.  
\item[*] \textbf{Any.} refers to Anymal robots.  
\item[*] $\uparrow$ indicates that a higher value is better; $\downarrow$ indicates that a lower value is better.  
\item[*] Bold values indicate the best performance.  
\end{tablenotes}
\end{threeparttable}
\end{table*}

\begin{figure}[t]
\centering
\includegraphics[width=0.5\textwidth,trim={2cm 0.1cm 1.0cm 1.5cm}, clip]{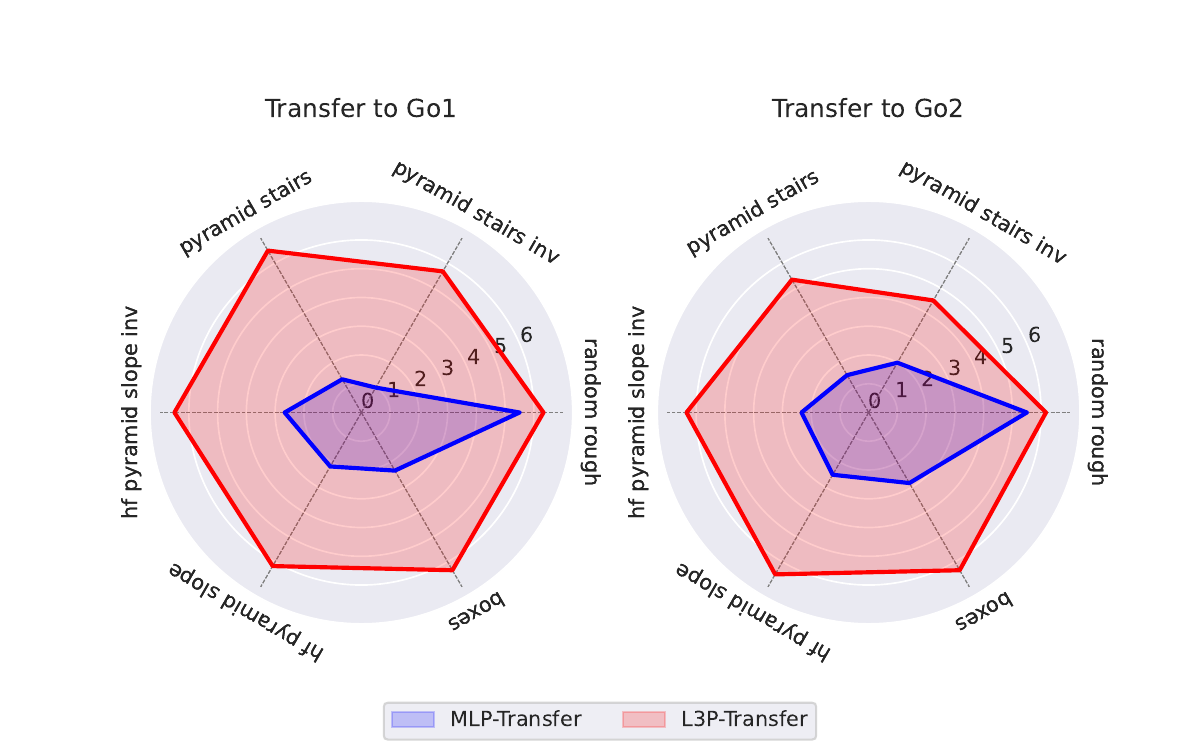}
\caption{
Performance verification by comparing L3P-Transfer with MLP-Transfer.
Each part represents an ability of passing corresponding terrain. The value represents the final level the policy could reach in the end through insufficient training on a simple flat terrain.
}
\label{fig:transfer_radar}
\end{figure}

\begin{figure}
\centering
\begin{subfigure}[b]{0.49\textwidth}
    \includegraphics[width=\textwidth]{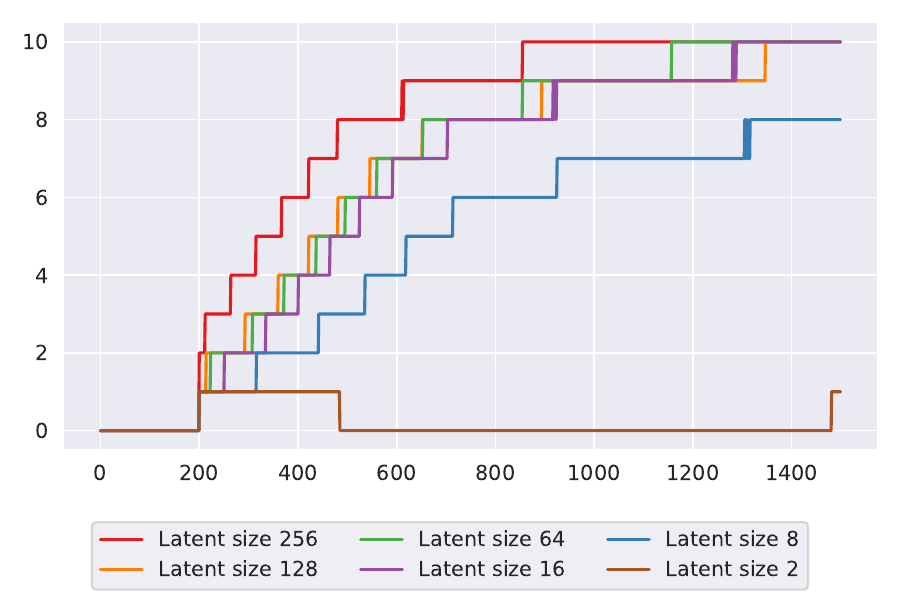}
     \caption{Latent observation }
    \label{obs_ablation}
\end{subfigure} \\
\begin{subfigure}[b]{0.49\textwidth}
    \includegraphics[width=\textwidth]{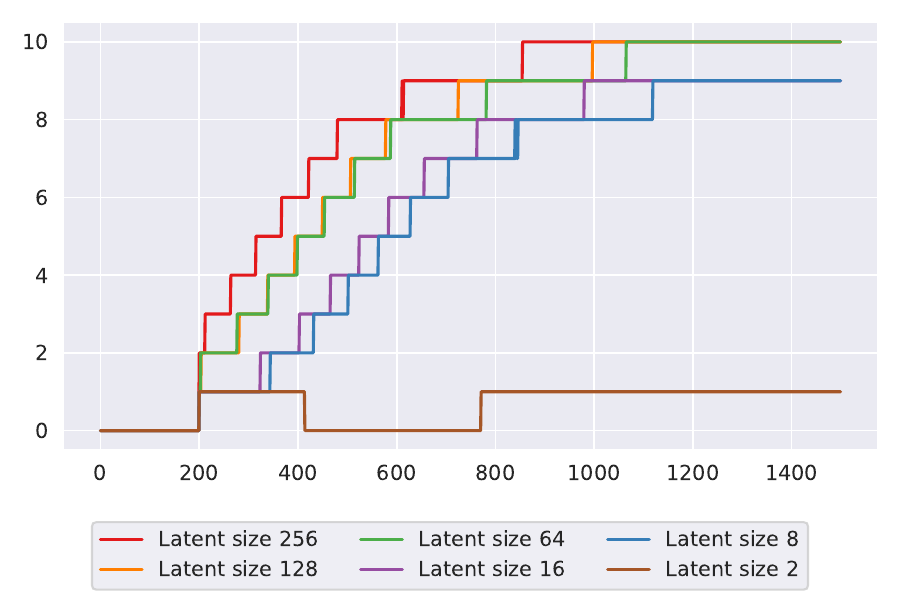}
     \caption{Latent action}
    \label{action_ablation}
\end{subfigure}
    \caption{Ablation study on the dimension of latent variables for both observation and action. }
    \label{fig:ablation}
\end{figure}



\section{Experiment}
\label{sec:experiments}

We conduct extensive experiments in both simulation and real-world settings to evaluate the performance of our proposed L3P framework. Our experiments are designed to demonstrate three key aspects: 
(i) the same functionality of our L3P structure as generally adopted MLP in learning locomotion skills; 
(ii) the transferability of the pretrained latent-to-latent policy across different robotic platforms and terrain complexities. 
(iii) the effectiveness of our L3P structure in fast adaptation from simulation to real world robots.

\subsection{Simulation Experiment Setup}

In this subsection, we describe the experimental setup and training process in simulation. The experiments are conducted using the IsaacLab framework \cite{mittal2023orbit}, with configurations borrowed from IsaacGym \cite{makoviychuk2021isaac}, which support large-scale, parallelized physics simulations on high-performance GPUs.

As shown in Fig. \ref{fig:playground}, the training environment consists of a series of courses (tracks) with progressively increasing difficulty levels. An agent must successfully complete a level to progress to the next; failure to do so results in a fallback to the previous level. The terrain difficulty settings are detailed in Table \ref{tab:terrain_values}, with  a total of 10 discrete difficulty levels for each track.

To evaluate the performance of the trained policies, we utilize two key metrics:
\begin{itemize}
    \item \textbf{Level}: The accomplished environment difficulty level during the training process.
    \item \textbf{Reward}: The cumulative reward obtained during the training process.
\end{itemize}
The average environment difficulty level is used for a fair cross-platform comparison, as each robot may have a different reward structure.

Regarding the training process, we use the Unitree A1 robot as the prototype for initial latent space definition. The L3P framework is first trained on this prototype platform in a multi-track simulated environment. To ensure commonality, all platforms are trained using relative joint position actions.
For training algorithm, we utilize the the implementation of PPO by rsl rl \cite{rudin2022learning}. 

\subsection{Transfer Experiments}
\subsubsection{Generalization from a Single Entity to Diverse Entities}

\textbf{Task Setup:} 
To evaluate the effectiveness of our approach in transferring locomotion policies across different robotic platforms, we conduct transfer experiments on seven widely used legged robots (e.g., A1, Go1, Go2), as illustrated in Fig. \ref{subfig:entities}. 
We compare four baseline methods:
\begin{itemize}
    \item \textbf{MLP}: A fully connected end-to-end policy trained from scratch for each new entity.
    \item \textbf{L3P}: Our proposed method, which learns a latent space representation while optimizing all modules without freezing the latent-to-latent policy.
    \item \textbf{L3P-Transfer}: Our L3P where a  learned latent-to-latent policy on A1 robot backbone remains frozen, and only the observation encoder and action decoder are optimized during adaptation.
    \item \textbf{MLP-Transfer}: An approach where an initial MLP policy is trained on A1 robot and then used as a starting point for continual training.
\end{itemize}

For a fair comparison, all methods use identical learnable parameters and share the same hyper-parameters. Each policy is trained for 1500 iterations.

\textbf{Results:}
Fig. \ref{fig:main_exp_transfer} presents the training curves for the entity transfer experiments. 
Ours L3P method achieves comparable performance to the MLP baseline, validating that our latent space formulation does not hinder optimization, as supported by Theorem \ref{thm:latent_opt_align}. 
Additionally, L3P-Transfer exhibits the fastest convergence across all tested entities, demonstrating superior adaptability. 
In contrast, MLP-Transfer performs well on Unitree robots but suffers significant degradation when applied to other platforms.

A detailed metric comparison is provided in Table \ref{tab.sim_result_large}. Notably, in most tasks, L3P-Transfer maintains the fastest convergence speed, achieving up to 3$\times$ faster convergence. The reward converges more quickly as the terrain difficulty gradually increases during training. However, while the policy stabilizes in terms of reward, it must still progress through each terrain level sequentially, introducing a delay in fully reflecting its convergence capability.

\begin{figure}
\centering
\caption*{(1) Unstable Walking Snapshots}
\begin{subfigure}[b]{0.15\textwidth}
    \includegraphics[width=\textwidth]{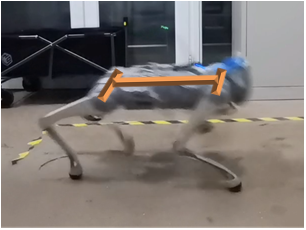}
\end{subfigure}
\hfill
\begin{subfigure}[b]{0.15\textwidth}
    \includegraphics[width=\textwidth]{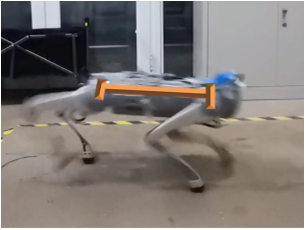}
\end{subfigure}
\hfill
\begin{subfigure}[b]{0.15\textwidth}
    \includegraphics[width=\textwidth]{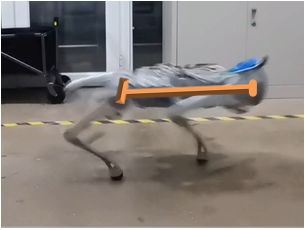}
\end{subfigure} \\
\caption*{(2) Stable Walking Snapshots}
\begin{subfigure}[b]{0.15\textwidth}
    \includegraphics[width=\textwidth]{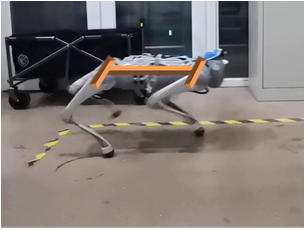}
\end{subfigure}
\hfill
\begin{subfigure}[b]{0.15\textwidth}
    \includegraphics[width=\textwidth]{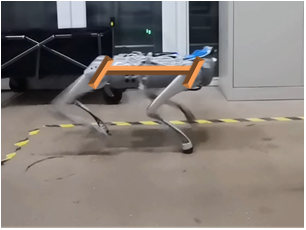}
\end{subfigure}
\hfill
\begin{subfigure}[b]{0.15\textwidth}
    \includegraphics[width=\textwidth]{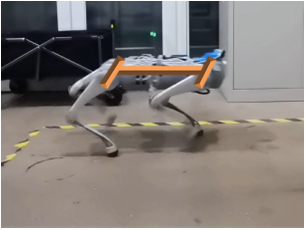}
\end{subfigure} \\
\caption{Comparison of real-world deployment between direct and adapted policies. Three key walking postures are selected for each case. Orange lines indicate the robot's base posture.}
\label{fig:real_exp}
\end{figure}

\subsubsection{Adaptation from Simple to Complex Terrains}

\textbf{Task Setup:} A new robot is introduced, with access only to simple flat terrain for fine-tuning, which simulates the scenarios where training a specific robot on complex or target terrains is impractical
The objective is to fine-tune the policy on this simple terrain and subsequently evaluate its performance on complex and challenging terrains.

\textbf{Experimental Details:} 
We compare two methods: L3P-Transfer and MLP-Transfer. 
Both are pre-trained on the Unitree A1 robot navigating complex terrains. 
For evaluation, we utilize the Unitree Go1 and Go2 robots as new entities, fine-tuning them on simple rough terrain.

\textbf{Evaluation Procedure:} 
After training convergence, we assess the policies on new tracks, allowing the robots to traverse as far as possible to gauge their capabilities. 

\textbf{Results:} 
The results are depicted in Fig. \ref{fig:transfer_radar}.
The L3P-Transfer method maintains robust performance across tasks, whereas the MLP-Transfer method suffers from catastrophic forgetting, failing to navigate complex terrains and only demonstrating competence on simple rough terrains.

\subsection{Ablation Study on Latent Variable Dimensions}
\textbf{Task Setup:} To evaluate the impact of latent space dimensionality, we perform ablation studies on both the latent observation and latent action spaces by varying their dimensions and analyzing their effects on performance metrics.

\textbf{Result:}
As illustrated in Fig. \ref{fig:ablation}, we assess the convergence behavior of different configurations. 
For the latent action space, increasing its dimension from 16 to 128 yields no significant performance gain. 
However, reducing the latent action space below a certain threshold degrades performance. 
Notably, even with a minimal latent action dimension of 2, the policy retains basic locomotion capabilities, achieving nonzero terrain levels.

In contrast, the latent observation space consistently operates at a lower dimensionality than the raw observation space. 
Our results indicate that reducing the latent observation dimension from 256 to 64 has little effect on final performance. Even smaller configurations (e.g., 16 and 8) still maintain satisfactory results, demonstrating that latent observations can be efficiently compressed while preserving essential task-relevant information.

\begin{figure}
\centering
\caption*{(1) Simulation Snapshots}
\begin{subfigure}[b]{0.15\textwidth}
    \includegraphics[width=\textwidth]{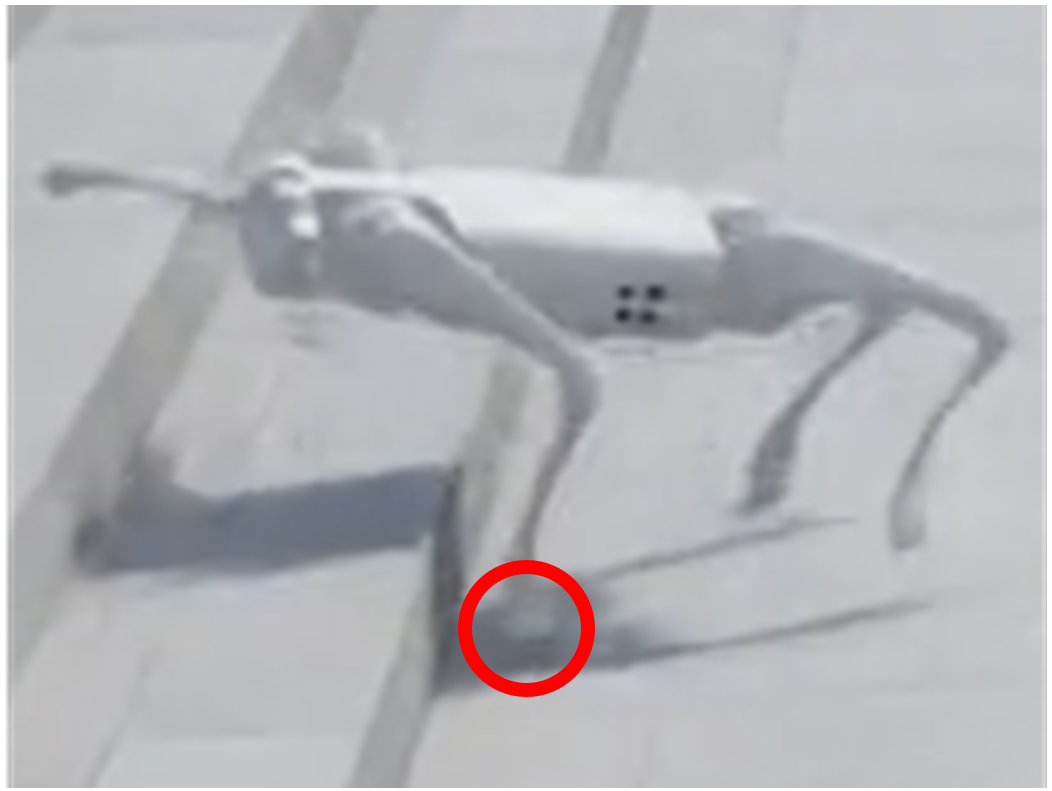}
\end{subfigure}
\hfill
\begin{subfigure}[b]{0.15\textwidth}
    \includegraphics[width=\textwidth]{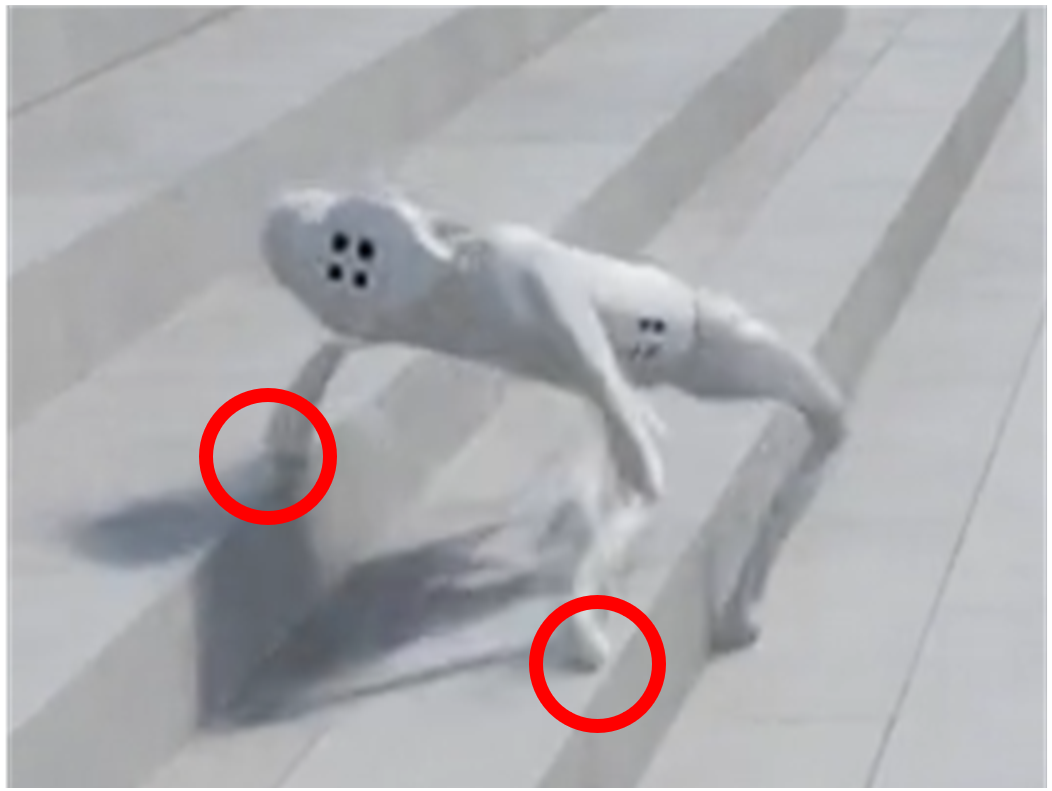}
\end{subfigure}
\hfill
\begin{subfigure}[b]{0.15\textwidth}
    \includegraphics[width=\textwidth]{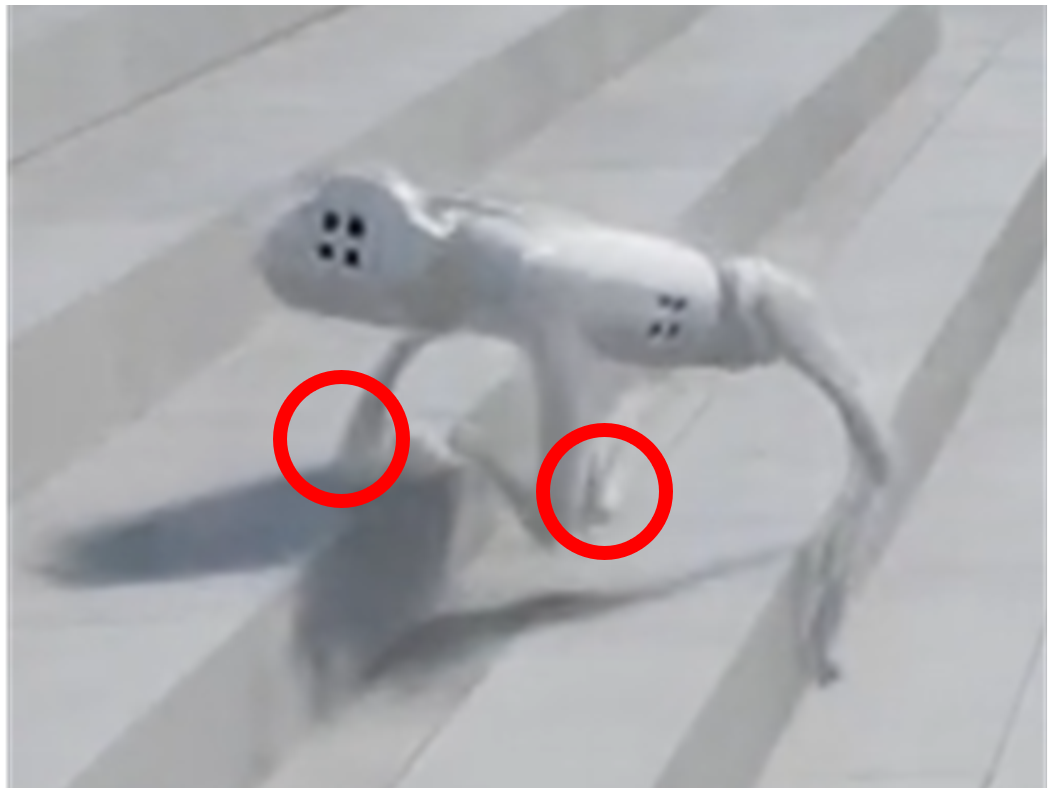}
\end{subfigure} \\
\caption*{(2) Real-World Snapshots}
\begin{subfigure}[b]{0.15\textwidth}
    \includegraphics[width=\textwidth]{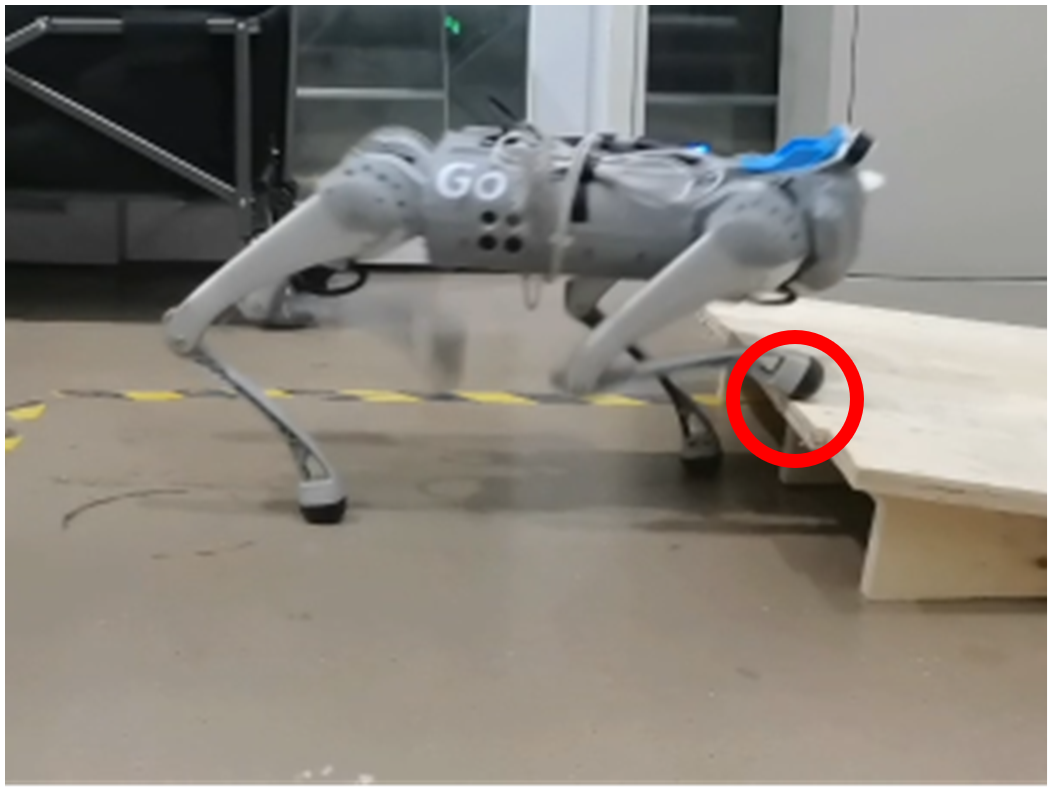}
\end{subfigure}
\hfill
\begin{subfigure}[b]{0.15\textwidth}
    \includegraphics[width=\textwidth]{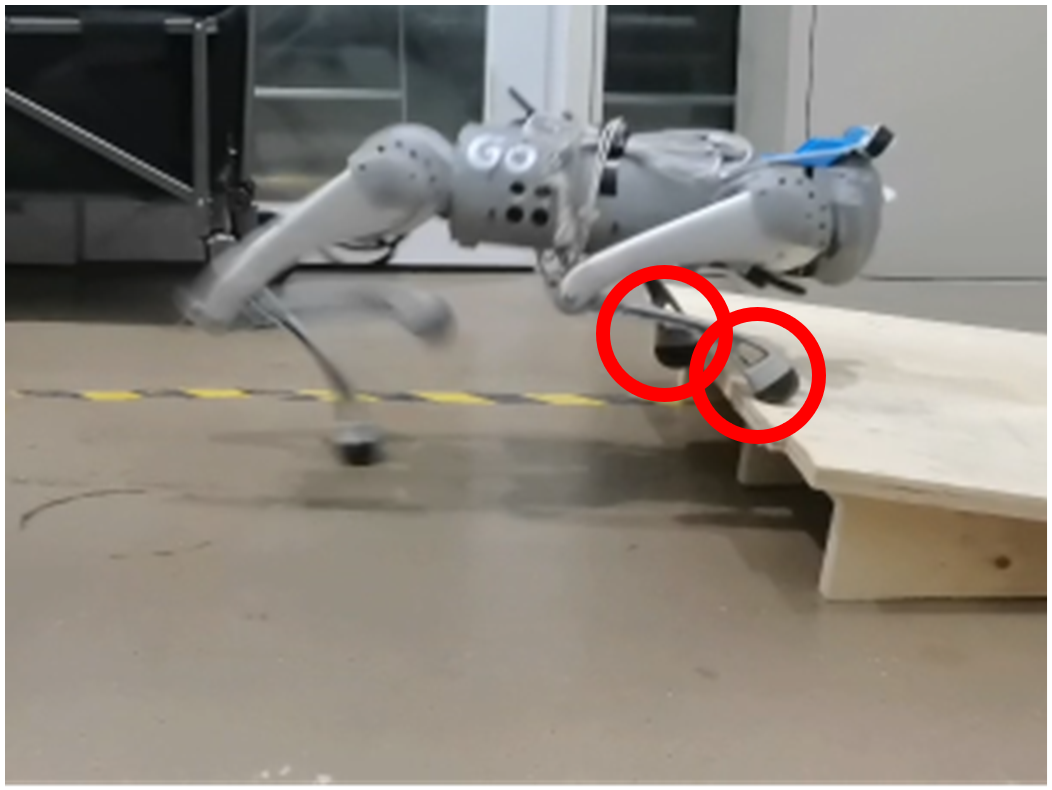}
\end{subfigure}
\hfill
\begin{subfigure}[b]{0.15\textwidth}
    \includegraphics[width=\textwidth]{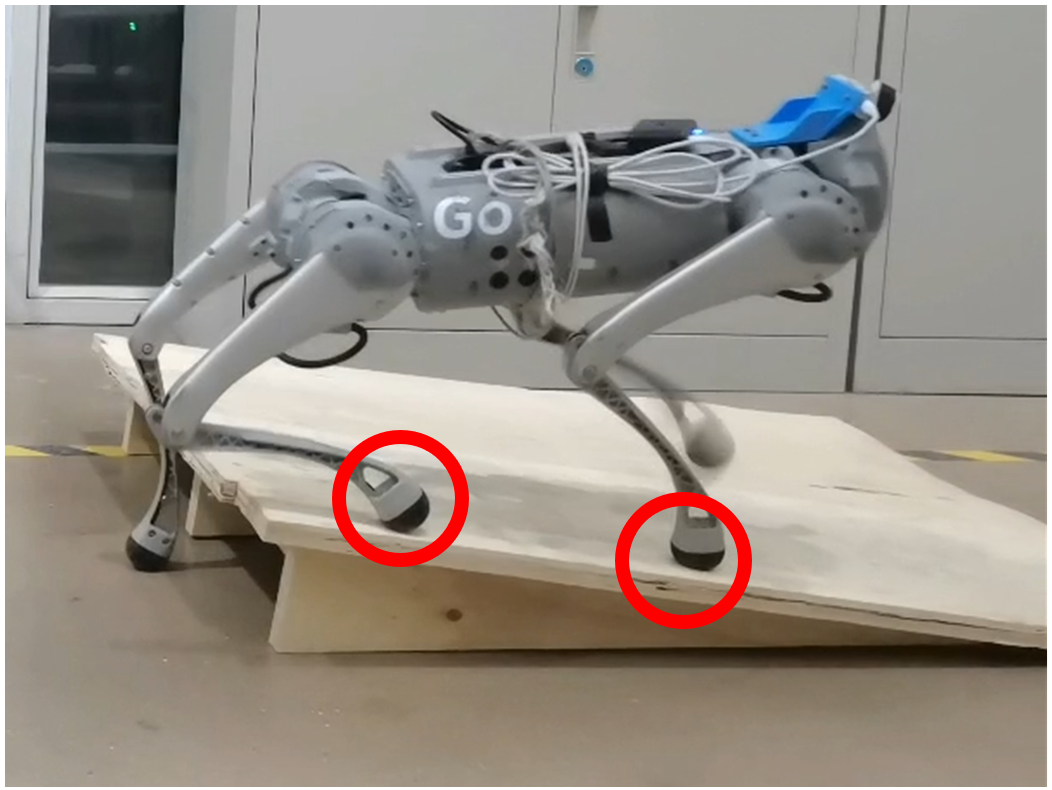}
\end{subfigure} \\
\caption{Comparison of stair traversal between simulation and real-world deployment. Red circles highlight foot contact points. Three key contact frames are selected to illustrate performance.}
\label{fig:real_cmp_sim}
\end{figure}

\subsection{Real-World Experiments}

\textbf{Deployment Pipeline:}  
For our real-world experiments, we select the Unitree Go1 as the target platform and use the pretrained latent-to-latent policy on A1 robot to start fine-tuning.  
Observations are constructed using only proprioceptive feedback and foot contact forces.  

\textbf{Task Setup:}  
Following a similar methodology as in simulation, we treat the real-world robot as an independent domain, distinct from the simulated training environments.  
We deploy the learned latent policy onto the real robot and fine-tune it in real-world conditions using collected data \cite{smith2022legged} within the L3P framework.  
The fine-tuning process focuses on a walking adjustment task to enhance stability and posture control.  
Finally, we evaluate the robot’s performance on a stair traversal challenge\cite{chamorro2024reinforcement, siekmann2021blind}.  

\textbf{Results:}  
As shown in Fig. \ref{fig:real_exp}, direct deployment without adaptation results in significant instability, leading to poor posture control.  
In contrast, the fine-tuned policy enables the robot to maintain a stable and balanced stance throughout the task.  
For stair traversal, as illustrated in Fig. \ref{fig:real_cmp_sim}, the fine-tuned policy exhibits good behaviors consistent with simulation, particularly in foot contact sequencing.  
The adapted policy successfully enables the robot to navigate the stairs.

\section{Conclusion}
\label{sec:conclusion}

In this work, we proposed the Latent-to-Latent Locomotion Policy (L3P) framework, which introduces a task-independent latent space optimization process to enhance the efficiency of locomotion policy transfer. 
By utilizing a diffusion-based recovery module and joint optimizing policy performance and latent reconstruction loss, our approach ensures that the latent-to-latent policy encapsulates well-established skills and is ready for transfer without information loss. 
Through a series of experiments, we demonstrated the effectiveness of L3P in both single-to-diverse entity transfer and simple-to-complex terrain adaptation. 
The results highlight the potential of latent-to-latent policies in enabling versatile and efficient robotic locomotion. Future work will focus on refining the recovery module and expanding our framework to tackle more complex multi-entity and multi-task scenarios.
In summary, the L3P framework offers a powerful and ease-to-implement solution to the challenge of general motion control of diverse legged robots, paving the way for more adaptable and robust robotic systems capable of handling diverse environments and tasks.

\bibliographystyle{IEEEtran}
\bibliography{reference.bib}

\begin{thebibliography}{10}
\providecommand{\url}[1]{#1}
\csname url@samestyle\endcsname
\providecommand{\newblock}{\relax}
\providecommand{\bibinfo}[2]{#2}
\providecommand{\BIBentrySTDinterwordspacing}{\spaceskip=0pt\relax}
\providecommand{\BIBentryALTinterwordstretchfactor}{4}
\providecommand{\BIBentryALTinterwordspacing}{\spaceskip=\fontdimen2\font plus
\BIBentryALTinterwordstretchfactor\fontdimen3\font minus \fontdimen4\font\relax}
\providecommand{\BIBforeignlanguage}[2]{{%
\expandafter\ifx\csname l@#1\endcsname\relax
\typeout{** WARNING: IEEEtran.bst: No hyphenation pattern has been}%
\typeout{** loaded for the language `#1'. Using the pattern for}%
\typeout{** the default language instead.}%
\else
\language=\csname l@#1\endcsname
\fi
#2}}
\providecommand{\BIBdecl}{\relax}
\BIBdecl

\bibitem{bellicoso2018advances}
C.~D. Bellicoso, M.~Bjelonic, L.~Wellhausen, K.~Holtmann, F.~G{\"u}nther, M.~Tranzatto, P.~Fankhauser, and M.~Hutter, ``Advances in real-world applications for legged robots,'' \emph{Journal of Field Robotics}, vol.~35, no.~8, pp. 1311--1326, 2018.

\bibitem{cheng2024extreme}
X.~Cheng, K.~Shi, A.~Agarwal, and D.~Pathak, ``Extreme parkour with legged robots,'' in \emph{2024 IEEE International Conference on Robotics and Automation (ICRA)}.\hskip 1em plus 0.5em minus 0.4em\relax IEEE, 2024, pp. 11\,443--11\,450.

\bibitem{lindqvist2022multimodality}
B.~Lindqvist, S.~Karlsson, A.~Koval, I.~Tevetzidis, J.~Halu{\v{s}}ka, C.~Kanellakis, A.-a. Agha-mohammadi, and G.~Nikolakopoulos, ``Multimodality robotic systems: Integrated combined legged-aerial mobility for subterranean search-and-rescue,'' \emph{Robotics and Autonomous Systems}, vol. 154, p. 104134, 2022.

\bibitem{arm2024pedipulate}
P.~Arm, M.~Mittal, H.~Kolvenbach, and M.~Hutter, ``Pedipulate: Enabling manipulation skills using a quadruped robot’s leg,'' in \emph{2024 IEEE International Conference on Robotics and Automation (ICRA)}.\hskip 1em plus 0.5em minus 0.4em\relax IEEE, 2024, pp. 5717--5723.

\bibitem{zhang2024learning}
C.~Zhang, N.~Rudin, D.~Hoeller, and M.~Hutter, ``Learning agile locomotion on risky terrains,'' in \emph{2024 IEEE/RSJ International Conference on Intelligent Robots and Systems (IROS)}.\hskip 1em plus 0.5em minus 0.4em\relax IEEE, 2024, pp. 11\,864--11\,871.

\bibitem{wellhausen2021rough}
L.~Wellhausen and M.~Hutter, ``Rough terrain navigation for legged robots using reachability planning and template learning,'' in \emph{2021 IEEE/RSJ International Conference on Intelligent Robots and Systems (IROS)}.\hskip 1em plus 0.5em minus 0.4em\relax IEEE, 2021, pp. 6914--6921.

\bibitem{peng2018deepmimic}
X.~B. Peng, P.~Abbeel, S.~Levine, and M.~Van~de Panne, ``Deepmimic: Example-guided deep reinforcement learning of physics-based character skills,'' \emph{ACM Transactions On Graphics (TOG)}, vol.~37, no.~4, pp. 1--14, 2018.

\bibitem{peng2018sfv}
X.~B. Peng, A.~Kanazawa, J.~Malik, P.~Abbeel, and S.~Levine, ``Sfv: Reinforcement learning of physical skills from videos,'' \emph{ACM Transactions On Graphics (TOG)}, vol.~37, no.~6, pp. 1--14, 2018.

\bibitem{li2023learning}
C.~Li, M.~Vlastelica, S.~Blaes, J.~Frey, F.~Grimminger, and G.~Martius, ``Learning agile skills via adversarial imitation of rough partial demonstrations,'' in \emph{Conference on Robot Learning}.\hskip 1em plus 0.5em minus 0.4em\relax PMLR, 2023, pp. 342--352.

\bibitem{li2023rlbook}
S.~E. Li, \emph{Reinforcement Learning for Sequential Decision and Optimal Control}.\hskip 1em plus 0.5em minus 0.4em\relax Springer Verlag, Singapore, 2023.

\bibitem{duan2021distributional}
J.~Duan, Y.~Guan, S.~E. Li, Y.~Ren, Q.~Sun, and B.~Cheng, ``Distributional soft actor-critic: Off-policy reinforcement learning for addressing value estimation errors,'' \emph{IEEE Transactions on Neural Networks and Learning Systems}, vol.~33, no.~11, pp. 6584--6598, 2021.

\bibitem{schrittwieser2020mastering}
J.~Schrittwieser, I.~Antonoglou, T.~Hubert, K.~Simonyan, L.~Sifre, S.~Schmitt, A.~Guez, E.~Lockhart, D.~Hassabis, T.~Graepel \emph{et~al.}, ``Mastering atari, go, chess and shogi by planning with a learned model,'' \emph{Nature}, vol. 588, no. 7839, pp. 604--609, 2020.

\bibitem{vinyals2019grandmaster}
O.~Vinyals, I.~Babuschkin, W.~M. Czarnecki, M.~Mathieu, A.~Dudzik, J.~Chung, D.~H. Choi, R.~Powell, T.~Ewalds, P.~Georgiev \emph{et~al.}, ``Grandmaster level in starcraft ii using multi-agent reinforcement learning,'' \emph{Nature}, vol. 575, no. 7782, pp. 350--354, 2019.

\bibitem{peng2020learning}
X.~B. Peng, E.~Coumans, T.~Zhang, T.~Lee, J.~Tan, and S.~Levine, ``Learning agile robotic locomotion skills by imitating animals,'' in \emph{Robotics: Science and Systems}, 2020.

\bibitem{liu2018learning}
L.~Liu and J.~Hodgins, ``Learning basketball dribbling skills using trajectory optimization and deep reinforcement learning,'' \emph{Acm transactions on graphics (tog)}, vol.~37, no.~4, pp. 1--14, 2018.

\bibitem{lee2019scalable}
S.~Lee, M.~Park, K.~Lee, and J.~Lee, ``Scalable muscle-actuated human simulation and control,'' \emph{ACM Transactions On Graphics (TOG)}, vol.~38, no.~4, pp. 1--13, 2019.

\bibitem{longhybrid}
J.~Long, Z.~Wang, Q.~Li, L.~Cao, J.~Gao, and J.~Pang, ``Hybrid internal model: Learning agile legged locomotion with simulated robot response,'' in \emph{The Twelfth International Conference on Learning Representations}.

\bibitem{smith2022legged}
L.~Smith, J.~C. Kew, X.~B. Peng, S.~Ha, J.~Tan, and S.~Levine, ``Legged robots that keep on learning: Fine-tuning locomotion policies in the real world,'' in \emph{2022 International Conference on Robotics and Automation (ICRA)}.\hskip 1em plus 0.5em minus 0.4em\relax IEEE, 2022, pp. 1593--1599.

\bibitem{hanna2017grounded}
J.~Hanna and P.~Stone, ``Grounded action transformation for robot learning in simulation,'' in \emph{Proceedings of the AAAI Conference on Artificial Intelligence}, vol.~31, no.~1, 2017.

\bibitem{li2024experience}
Y.~Li, J.~Wu, X.~Liu, W.~Guo, and Y.~Xue, ``Experience-learning inspired two-step reward method for efficient legged locomotion learning towards natural and robust gaits,'' in \emph{2024 IEEE/RSJ International Conference on Intelligent Robots and Systems (IROS)}.\hskip 1em plus 0.5em minus 0.4em\relax IEEE, 2024, pp. 13\,297--13\,302.

\bibitem{tan2018sim}
J.~Tan, T.~Zhang, E.~Coumans, A.~Iscen, Y.~Bai, D.~Hafner, S.~Bohez, and V.~Vanhoucke, ``Sim-to-real: Learning agile locomotion for quadruped robots,'' \emph{arXiv preprint arXiv:1804.10332}, 2018.

\bibitem{margolis2023walk}
G.~B. Margolis and P.~Agrawal, ``Walk these ways: Tuning robot control for generalization with multiplicity of behavior,'' in \emph{Conference on Robot Learning}.\hskip 1em plus 0.5em minus 0.4em\relax PMLR, 2023, pp. 22--31.

\bibitem{chaffre2020sim}
T.~Chaffre, J.~Moras, A.~Chan-Hon-Tong, and J.~Marzat, ``Sim-to-real transfer with incremental environment complexity for reinforcement learning of depth-based robot navigation,'' in \emph{17th International Conference on Informatics, Automation and Robotics, ICINCO 2020}, 2020, pp. 314--323.

\bibitem{li2022human}
Q.~Li, Z.~Peng, H.~Wu, L.~Feng, and B.~Zhou, ``Human-ai shared control via policy dissection,'' \emph{Advances in Neural Information Processing Systems}, vol.~35, pp. 8853--8867, 2022.

\bibitem{ho2020denoising}
J.~Ho, A.~Jain, and P.~Abbeel, ``Denoising diffusion probabilistic models,'' \emph{Advances in neural information processing systems}, vol.~33, pp. 6840--6851, 2020.

\bibitem{hutter2016anymal}
M.~Hutter, C.~Gehring, D.~Jud, A.~Lauber, C.~D. Bellicoso, V.~Tsounis, J.~Hwangbo, K.~Bodie, P.~Fankhauser, M.~Bloesch \emph{et~al.}, ``Anymal-a highly mobile and dynamic quadrupedal robot,'' in \emph{2016 IEEE/RSJ international conference on intelligent robots and systems (IROS)}.\hskip 1em plus 0.5em minus 0.4em\relax IEEE, 2016, pp. 38--44.

\bibitem{mittal2023orbit}
M.~Mittal, C.~Yu, Q.~Yu, J.~Liu, N.~Rudin, D.~Hoeller, J.~L. Yuan, R.~Singh, Y.~Guo, H.~Mazhar, A.~Mandlekar, B.~Babich, G.~State, M.~Hutter, and A.~Garg, ``Orbit: A unified simulation framework for interactive robot learning environments,'' \emph{IEEE Robotics and Automation Letters}, vol.~8, no.~6, pp. 3740--3747, 2023.

\bibitem{makoviychuk2021isaac}
V.~Makoviychuk, L.~Wawrzyniak, Y.~Guo, M.~Lu, K.~Storey, M.~Macklin, D.~Hoeller, N.~Rudin, A.~Allshire, A.~Handa, and G.~State, ``Isaac gym: High performance gpu-based physics simulation for robot learning,'' 2021.

\bibitem{rudin2022learning}
N.~Rudin, D.~Hoeller, P.~Reist, and M.~Hutter, ``Learning to walk in minutes using massively parallel deep reinforcement learning,'' in \emph{Proceedings of the 5th Conference on Robot Learning}, ser. Proceedings of Machine Learning Research, vol. 164.\hskip 1em plus 0.5em minus 0.4em\relax PMLR, 2022, pp. 91--100.

\bibitem{chamorro2024reinforcement}
S.~Chamorro, V.~Klemm, M.~d. L.~I. Valls, C.~Pal, and R.~Siegwart, ``Reinforcement learning for blind stair climbing with legged and wheeled-legged robots,'' in \emph{2024 IEEE International Conference on Robotics and Automation (ICRA)}.\hskip 1em plus 0.5em minus 0.4em\relax IEEE, 2024, pp. 8081--8087.

\bibitem{siekmann2021blind}
J.~Siekmann, K.~Green, J.~Warila, A.~Fern, and J.~Hurst, ``Blind bipedal stair traversal via sim-to-real reinforcement learning,'' in \emph{Robotics: Science and Systems}, 2021.

\end{thebibliography}


\end{document}